\author{
  Sajjad  {Hashemian}\\
  \texttt{shashemianm@math.uk.ac.ir}
  \and
  Mohammad Saeed Arvenaghi\\
  \texttt{m\_arvenaghi@mathdep.iust.ac.ir}
  \and
  Ebrahim Ardeshir-Larijani\\
  \texttt{larijani@iust.ac.ir}
}
\title{Optimal Bound for PCA with Outliers using Higher-Degree Voronoi Diagrams}
\begin{document}
\maketitle

\begin{abstract}
In this paper, we introduce new algorithms for Principal Component Analysis (PCA) with outliers. Utilizing techniques from computational geometry, specifically higher-degree Voronoi diagrams, we navigate to the optimal subspace for PCA even in the presence of outliers. This approach achieves an optimal solution with time complexity of \(n^{d+\mathcal{O}(1)}\text{poly}(n,d)\).  
Additionally, we present a randomized algorithm with complexity \(O(n^r \log(1/\delta) / C(d,r,\alpha))\text{poly}(n,d)\). Our approach leverages properties of high-dimensional spaces and the separation condition of outliers to efficiently recover the optimal subspace. Our results demonstrate that higher-degree Voronoi diagrams, combined with probabilistic subspace selection techniques, provide an effective and scalable solution for PCA with outliers.
\end{abstract}

\section{Introduction}

	Principal Component Analysis (PCA) is one of the most widely used techniques of dimensionality reduction algorithms in machine learning. In PCA, the goal is to find the best low-rank approximation of a given matrix. This can be conveniently achieved using singular value decomposition (SVD) of the given matrix \cite{pca}. SVD decomposes the matrix into three matrices: a left singular matrix, a diagonal matrix of singular values, and a right singular matrix. The low-rank approximation can be obtained by keeping only the top $r$ singular values and their corresponding singular vectors, where $r$ is the desired rank of the low-rank approximation. This is shown in ~(\ref{eq:1}).
\begin{equation}\label{eq:1}
\begin{split}
& \texttt{Input:}\:\text{X}\in \mathcal{M}_{n\times d}(\mathbb{R}), r\in\{1,2,\cdots,d\}\\
& \arg\min_{L}  ||\text{X}-L||_F^2\ \\
& \texttt{Subject}\:\texttt{to:}\: \text{rank} (L)\leq r   
\end{split}
\end{equation}	
Here $||\text{A}||_F^2 = \sum_{i,j}{a_{i,j}^2} $ is the square of the Frobenius norm of matrix A.

PCA has many applications in machine learning, such as image processing \cite{gonzalez2002digital, barrett2013foundations}, signal processing \cite{castells2007principal,MENG2021108503}, neuroscience \cite{chapin1999nicolelis,jirsa1994theoretical}, and quantitative finance \cite{pasini2017principal, yang2015application}. 
It is often used to reduce the dimensionality of high-dimensional data while preserving the most important information. This leads to improvement in the efficiency and accuracy of machine learning models that work with large high-dimensional data.

In the presence of outliers, the classical PCA algorithm may not work very well, as they can significantly affect the results.
The \textsc{PCA with Outliers} problem attempts to improve the results of PCA by \textit{removing some data points} that are considered outliers. The goal is to recover a low-rank matrix with sparse errors. Here, the assumption is that the data points lie near an $r$-dimensional subspace, and the remaining points are distributed randomly, possibly as a result of distorted observations.
This problem can be formulated as follows: given a matrix $\text{X}\in \mathcal{M}_{n\times d}(\mathbb{R})$, where $n$ is the number of data points and $d$ is the dimensionality of the data, and a parameter $r$ representing the rank of the low-rank matrix, the goal is to find a low-rank matrix $L$ and a sparse error matrix $S$ such that $\text{X}=S+L$, where $\text{rank} (L)\leq r$ and $||S||_\#\leq k$, where $k$ is a parameter representing the number of outliers to be removed,
and  $||\cdot||_{\#} $ denotes the number of non-zero rows in a matrix, as shown in~(\ref{eq:2}).

\begin{equation}\label{eq:2}
\begin{split}
    & \texttt{Input:}\quad \text{X}\in \mathcal{M}_{n\times d}(\mathbb{R}), r\in\{1,2,\cdots,d\},k\in\{1,2,\cdots,n\}\\
    & \texttt{Output:}\quad \operatorname{argmin}_{S, L}  ||\text{X}-S-L||_F^2 \\
    & \texttt{Subject to:} \quad \text{rank} (L)\leq r, ||S||_{\#}\leq k
\end{split}   
\end{equation}

The \textsc{PCA with Outliers} problem, or robust PCA, has many applications in various fields of research. It can be used to remove noise or artifacts from images in medical imaging \cite{solomon2019deep,ahmadi2023detection}, detect anomalies or outliers in financial data in portfolio management \cite{crepey2022anomaly}, identify suspicious activities in video data \cite{bouwmans2018applications}, and remove noise or distortions from signals in speech recognition \cite{hung2018employing}. These applications demonstrate the versatility and usefulness of the \textsc{PCA with Outliers} problem in various fields of research.

\textbf{Our Results}
The contribution of this paper is as follows.

First, we propose a new algorithm for \textsc{PCA with Outliers} with time complexity \( n^{\mathcal{O}(d)} \text{poly}(n, d) \), which fills the gap identified in~\cite{refined} as this is the optimal bound. To achieve this, we leverage higher-degree Voronoi diagrams, which are defined in the next section.

Second, we develop a randomized algorithm with time complexity 
\(
O\Bigl( \frac{n^r \log(1/\delta)}{C(d,r,\alpha)}\text{poly}(n,d) \Bigr),
\)
where \(C(d,r,\alpha)\) is an explicit constant depending on \( d, r, \) and the separation parameter \( \alpha \). 
Our randomized algorithm leverages the \(\alpha\)-gap condition to ensure that inliers and outliers are well-separated, making it possible to identify the optimal subspace with high probability. We say the instance satisfies the \emph{$\alpha$-gap condition} when every outlier is at least $(1+\alpha)$ times farther from the optimal subspace $L^\star$ than every inlier (see Lemma \ref{alphagap}).
 Specifically, we prove that the probability of success is at least \(1 - \delta\), where \(\delta\) controls the failure rate over multiple independent trials. The algorithm works by iteratively sampling random subspaces and selecting the one that best fits the inliers.
This approach offers a scalable and conceptually intuitive solution to \textsc{PCA with Outliers}, particularly in high-dimensional settings where traditional deterministic methods are computationally infeasible. Our results indicate that higher-degree Voronoi diagrams, combined with randomized subspace selection, yield an optimal and practical solution for outlier-robust PCA.


\subsection{Related works}
To solve this problem exactly, one can trivially check all the possible $n\choose k$ ways to choose $k$ vectors as outliers and then perform PCA on the rest $n-k$ vectors and return the best result among all this as the solution of \textsc{PCA with Outliers}. However, not all these $n\choose k$ configurations are possible when you consider the geometric properties of the given vectors and possible subspaces.

In \cite{wright}, the authors formulated the \textsc{PCA with Outliers} as the optimization problem shown in (\ref{eq:3}).
\begin{equation}\label{eq:3}
\begin{split}
 & \min_{L,S} \Vert L \Vert_* + \lambda \Vert S \Vert_1 \\
 & \texttt{subject to:} \quad X = L + S
\end{split}    
\end{equation}
Where $\Vert \cdot \Vert_*$ denotes the nuclear norm of a matrix, which is the sum of its singular values, and $\Vert \cdot \Vert_1$ denotes the $\ell_1$ norm of a matrix, which is the sum of the absolute values of its entries. The parameter $\lambda$ controls the trade-off between the low-rank and sparse components of the data matrix. 

Also the \textsc{PCA with Outliers} problem can be solved using a variety of convex optimization techniques, such as the Alternating Direction Method of Multipliers or the Augmented Lagrangian Method.
The paper \cite{xu} proposes the Outlier Pursuit method to solve this convex problem. This algorithm is based on the observation that the sparse matrix $S$ can be decomposed into a sum of sparse vectors, each of which corresponds to an outlier. It iteratively identifies and removes the outliers from the data matrix, and then computes the low-rank component of the remaining data using PCA. The outliers are then added back to the low-rank component to obtain the final estimate of the data matrix. The mentioned algorithm requires the sparsity level to be known a priori, which can be a limitation in practice. Additionally, the algorithm assumes that the noise in the data matrix is Gaussian, which may not be true in practice.

To overcome the issues of the previous methods, \cite{chen} proposed the Manipulator Pursuit algorithm, which is more robust and flexible. This method can handle wider range of outlier models and noise distributions. It does not require the sparsity level to be known a priori as it uses data-driven approach to estimate the sparsity level and a nuclear norm regularization term to encourage low-rank solutions.

In \cite{refined}, the authors analyzed the computational complexity of the \textsc{PCA with Outliers} problem. They refined the lower bound on the time complexity of the problem by showing that it is $\Omega(n^d)$. Finally it is concluded that the lower bound is tight up to a polynomial factor, and also an algorithm for solving this problem in $n^{\mathcal{O}(d^2)}{\text{poly}(n,d)}$ is proposed using tools from real algebraic geometry. To be explicit, their result is summarized in the following proposition.

\begin{proposition}\label{pr:1} 
\textnormal{(\cite{basu_algorithms_2003}, Theorem 13.22)}
Consider $V$ as an algebraic set in $\mathbb{R}^d$ of real dimension $d'$, defined by $Q(X_1, \ldots, X_d) = 0$, where $Q$ is a polynomial in $\mathbb{R}[X_1, \ldots, X_d]$ of degree at most $b$. Let $\mathcal{P} \subset \mathbb{R}[X_1, \ldots, X_d]$ be a finite set of $s$ polynomials, each $P \in \mathcal{P}$ also having a degree at most $b$. Define $D$ as the ring generated by the coefficients of $Q$ and the polynomials in $\mathcal{P}$. Given $Q$, $d'$, and $\mathcal{P}$, there exists an algorithm that computes a set of points such that each non-empty cell of 
$V$ over $\mathcal{P}$ contains at least one point from the set.
This algorithm requires at most $s^{d'} b^{O(d)}$ arithmetic operations in $D$.

\end{proposition}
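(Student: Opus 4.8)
The plan is to establish Proposition~\ref{pr:1} via the \emph{critical point method} of real algebraic geometry, which reduces the problem of meeting every nonempty cell of $V$ over $\mathcal{P}$ to solving a controlled family of zero-dimensional polynomial systems whose sizes depend only on the real dimension $d'$ of $V$ and the degree bound $b$. The overall strategy proceeds in three stages: first, reduce to a bounded and nonsingular situation by infinitesimal deformation; second, compute on the deformed variety a finite set of critical points meeting every semialgebraically connected component of every realizable sign condition of $\mathcal{P}$; and third, take limits as the infinitesimals tend to zero to recover genuine sample points on $V$.

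First I would make $V$ bounded by intersecting with a sphere of radius $1/\Omega$ for an infinitesimal $\Omega$ adjoined to the ring $D$, so that every connected component of the original set is the limit of a bounded one. I would simultaneously resolve the singularities of $V$ by replacing the equation $Q=0$ with a generic infinitesimal perturbation $Q=\pm\epsilon$ that is nonsingular and still of real dimension $d'$; a Sard/Bertini-type genericity argument guarantees smoothness of the deformed set. Over this bounded nonsingular variety, the critical points of a generic linear projection — equivalently, of the squared-distance function to a point in general position — meet every semialgebraically connected component. This is the core geometric fact that converts a topological reachability question into a finite algebraic computation.

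The second stage incorporates the sign conditions of $\mathcal{P}$. The key device is to perturb each $P_i$ to $P_i \pm \delta_i$ with a tower of infinitesimals $\Omega \gg \epsilon \gg \delta_1 \gg \delta_2 \gg \cdots$, so that the boundaries between cells are pushed apart and a representative of each realizable cell appears as a critical point of the projection restricted to the variety cut out by $Q=\pm\epsilon$ together with a chosen subset of the perturbed $P_i$ set to zero. Each such system is zero-dimensional, and solving it through a rational univariate representation reduces the computation to root isolation of a single polynomial, costing $b^{O(d)}$ operations in $d$ ambient variables. Since the number of distinct sign conditions realizable by $s$ polynomials of degree $\le b$ on a $d'$-dimensional variety is $s^{d'} b^{O(d)}$, multiplying the combinatorial count by the per-system algebraic cost yields the claimed bound of $s^{d'} b^{O(d)}$ arithmetic operations in $D$.

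The main obstacle I anticipate is twofold. First, one must control the combinatorics so the factor is $s^{d'}$ rather than the naive $3^s$ or $s^d$; this requires invoking the quantitative bound on the number of realizable sign conditions on a variety of dimension $d'$ and arranging the critical-point computation to run uniformly (parametrically over the sign vectors) rather than being repeated independently for each cell. Second, one must manage the infinitesimal tower cleanly: a careful limit argument is needed to show that the limits of the computed points, as $\Omega, \epsilon, \delta_1, \ldots$ specialize to $0$, indeed land in the intended cells of the original set $V$. It is this bookkeeping, rather than any single sharp inequality, where most of the technical weight of the argument lies.
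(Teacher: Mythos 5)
The paper does not prove this statement at all: Proposition~\ref{pr:1} is imported verbatim as Theorem~13.22 of \cite{basu_algorithms_2003} and used as a black box, so there is no in-paper argument to compare yours against. Judged against the actual proof in the cited source, your sketch follows the right road: the result is indeed established there by the critical point method, with an infinitesimal sphere to force boundedness, infinitesimal deformations of $Q$ and of the $P_i$ to reach general position, critical points of a generic projection to meet every semialgebraically connected component, and a final specialization of the infinitesimals to land back on $V$. Two caveats. First, your accounting of the $s^{d'}$ factor as ``number of realizable sign conditions times per-system cost'' is not quite how the algorithm works: it cannot enumerate realizable cells in advance, so the $\binom{s}{\le d'}\approx s^{d'}$ factor actually arises from looping over all subsets of $\mathcal{P}$ of cardinality at most $d'$ and solving one zero-dimensional critical-point system per subset; the bound on realizable sign conditions is a consequence, not an ingredient, of that loop. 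Second, the two obstacles you flag at the end --- the uniform combinatorial control and the limit bookkeeping for the tower $\Omega \gg \epsilon \gg \delta_1 \gg \cdots$ --- are precisely where the entire technical weight of Basu--Pollack--Roy's argument sits, and your proposal names them without discharging them; as a standalone proof it is therefore an outline rather than a complete argument, though as a reconstruction of the cited theorem's proof strategy it is accurate.
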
  

\begin{proposition}

\label{lowerbound}
\textnormal{(\cite{refined}, Theorem 1)}
Solving \textsc{PCA with Outliers} is reducible to solving $n^{\mathcal{O}(d^2)}$ instances of PCA.

\end{proposition}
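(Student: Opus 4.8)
The plan is to isolate the combinatorial core of \textsc{PCA with Outliers} — the choice of which $k$ points to discard — and show that the number of \emph{distinct} outlier sets that can arise at an optimum is only $n^{\mathcal{O}(d^2)}$. The enabling observation is that once the outlier set is fixed, the residual problem on the surviving $n-k$ inlier points is an ordinary (outlier-free) PCA instance, solvable by SVD. Hence it suffices to enumerate a family of candidate inlier sets that is guaranteed to contain the inlier set of the optimum, run ordinary PCA on each, and return the cheapest result. To count the candidates, I would pass to the space of candidate subspaces and study the arrangement induced on it by the pairwise comparisons of point-to-subspace distances.

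First I would parametrize the candidate rank-$r$ subspaces $U \subseteq \mathbb{R}^d$ by their orthogonal projection matrices $P \in \mathbb{R}^{d \times d}$, which satisfy $P^2 = P$, $P^\top = P$, and $\operatorname{tr}(P) = r$. These equations (combined into a single polynomial of constant degree via a sum of squares) cut out an algebraic variety $V$ of real dimension $d' = r(d-r)$ inside the $\mathcal{O}(d^2)$-dimensional ambient space of symmetric matrices; this is an affine model of the Grassmannian. For each data point $x_i$ the squared distance to $U$ is $x_i^\top (I-P) x_i$, which is \emph{affine} in the entries of $P$. I would then, for each pair $(i,j)$, form the difference polynomial
\[
g_{ij}(P) = x_i^\top (I-P) x_i - x_j^\top (I-P) x_j,
\]
obtaining a family $\mathcal{P} = \{g_{ij}\}$ of cardinality $s = \binom{n}{2} = \mathcal{O}(n^2)$, each of degree $1$.

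The key point is that the sign vector of $\mathcal{P}$ at a point $P$ fixes the complete ordering of the data by distance to $U$, and therefore fixes which $k$ points are farthest, i.e. the outlier set; thus the outlier set is constant on each cell of the sign arrangement of $\mathcal{P}$ restricted to $V$. Applying Proposition~\ref{pr:1} with this $\mathcal{P}$, the variety $V$ of real dimension $d' = r(d-r)$, ambient dimension $\mathcal{O}(d^2)$, and degree bound $b = \mathcal{O}(1)$ produces a finite set of representative points meeting every nonempty cell, of size at most
\[
s^{d'} b^{\mathcal{O}(d^2)} = n^{2 r(d-r)} \cdot 2^{\mathcal{O}(d^2)} = n^{\mathcal{O}(d^2)},
\]
using $r(d-r) \le d^2/4$ and absorbing the $2^{\mathcal{O}(d^2)}$ factor into $n^{\mathcal{O}(d^2)}$. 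Each cell yields one candidate inlier set, hence one ordinary-PCA instance, so the reduction produces $n^{\mathcal{O}(d^2)}$ instances. Since the optimal projection $P^\ast$ lies in the closure of some cell whose associated inlier set equals the inlier set of the optimum, solving outlier-free PCA on that set cannot do worse than $P^\ast$, and the global optimum is recovered.

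The main obstacle I anticipate is the correctness bookkeeping rather than the counting: one must argue carefully that the sign pattern of $\mathcal{P}$ genuinely determines the outlier set and that the closure of the cell containing $P^\ast$ carries exactly the optimal inlier set, so that no optimum is lost. A related subtlety is handling ties and boundary strata — points equidistant from $U$, where "the $k$ farthest points" is ambiguous — which calls for either a general-position/perturbation argument or a consistent tie-breaking rule on the lower-dimensional cells. Finally, some care is needed to fold the several defining equations of $V$ (idempotency, symmetry, trace) into the single polynomial $Q$ and dimension count required by Proposition~\ref{pr:1} while keeping the degree and ambient dimension at their stated $\mathcal{O}(1)$ and $\mathcal{O}(d^2)$ orders.
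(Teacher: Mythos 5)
Your proposal is correct and follows essentially the same route as the paper: both reduce the problem to enumerating the cells of the sign arrangement of the $\binom{n}{2}$ pairwise distance-comparison polynomials over a parametrization of candidate subspaces via Proposition~\ref{pr:1}, and then solve one ordinary PCA instance per cell. The only difference is the parametrization --- you use the Grassmannian realized as idempotent symmetric projection matrices (giving degree-$1$ comparison polynomials on a variety of dimension $r(d-r)$), whereas the paper works over the affine space $\mathbb{R}^{(d-r)\times d}$ with degree-$2$ comparisons --- which yields a marginally sharper exponent but the same $n^{\mathcal{O}(d^2)}$ bound.
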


\begin{proof}
Consider an algebraic set $W = \mathbb{R}^{(d-r) \times d}$. For every $V$ in $W$, we define $S_{V}$ consists of $k$ non-zero rows, selected from $X$ based on those with the highest value of $\|x_i -\text{proj}_{V} x_i \|^2_F$, equivalent to $\|V x^T_i \|^2_F$. Meanwhile, $L_{V}$ represents the orthogonal projection of the rows of $(X - S_{V})$ onto $V$.

Now, let's contemplate the set of polynomials $ \mathcal{P} = \{P_{i,j}\}_{1\leq i<j\leq n} $ defined over $W$, where $$P_{i,j}(V) = \|V x^T_{i}\|_F^2 - \|V x^T_{j}\|_F^2.$$

The $k$ farthest points and the matrix $S_{V}$ remain consistent across all cells in the partition of $W$ over $\mathcal{P}$. Therefore, it suffices to select a point $V$ from each cell, calculate the outlier matrix $S_{V}$, and solve PCA for $(X - S_{V}, r)$.

Combining all steps, their algorithm functions as outlined below:

\begin{enumerate}
\item Utilize the procedure outlined in Proposition \ref{pr:1} to acquire a point $V_C$ from each cell $C$ of $W$ over $\mathcal{P}$.
\item For each $V_C$, compute the optimal $S_{V_C}$ by selecting the $k$ rows of $X$ with the highest value of $||V_C x^T_{i}||_F^2$. Construct the instance $(X - S_{V_C}, r)$ for PCA.
\item The solution to the original instance of \textsc{PCA with Outliers} is the optimal solution among all the solutions obtained from the constructed PCA instances.
\end{enumerate}

By running this algorithm, we can reduce the problem of \textsc{PCA with Outliers} to solving $\binom{n}{2}^{(d - r)d}2^{O(d)}$ instances of PCA.
\end{proof}

\begin{proposition}
\label{lowerbound}
\textnormal{\cite{refined}}
There is no $\omega$-approximation algorithm with running time $f(d)\cdot N^{{o}(d)}$ for any $\omega \geq 1$ and computable function $f$ unless Exponential Time Hypothesis (ETH) fails, where $N$ is the bit size of the input matrix $X$.
\end{proposition}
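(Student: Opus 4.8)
The plan is to establish this lower bound through a parameterized reduction from a problem that is hard under ETH when parameterized by a quantity that can be forced to equal, up to a constant factor, the ambient dimension $d$. The natural source is $k$-\textsc{Clique}: by the result of Chen, Huang, Kanj, and Xia, ETH rules out any algorithm for $k$-\textsc{Clique} running in time $f(k)\cdot n^{o(k)}$ for any computable $f$. First I would transform, in polynomial time, an $n$-vertex graph $G$ with target clique size $k$ into a \textsc{PCA with Outliers} instance $(X, r, k')$ of dimension $d = \Theta(k)$ and input bit size $N = \mathrm{poly}(n)$, arranged so that $G$ has a $k$-clique if and only if the optimum $\min_{S,L}\|X - S - L\|_F^2$ equals $0$. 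Granting such a reduction, a hypothetical $f(d)\cdot N^{o(d)}$ algorithm, pre-composed with it, would decide $k$-\textsc{Clique} in time $f(\Theta(k))\cdot \mathrm{poly}(n)^{o(k)} = f'(k)\cdot n^{o(k)}$, contradicting ETH.

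The approximation quantifier ``for any $\omega \ge 1$'' I would handle by a gap-at-zero argument, so that no genuine multiplicative inapproximability gap needs to be manufactured. In the intended construction a YES-instance of $k$-\textsc{Clique} yields an instance with optimal residual exactly $0$ (after deleting the $k'$ designated outlier rows, the surviving rows lie exactly in some rank-$r$ subspace), while a NO-instance forces every choice of $k'$ deleted rows to leave a strictly positive residual. Since any $\omega$-approximation algorithm returns a value at most $\omega\cdot\mathrm{OPT}$, it returns exactly $0$ on YES-instances (because $\omega\cdot 0 = 0$) and a positive value on NO-instances; hence, for every $\omega \ge 1$, such an algorithm already decides the instance. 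Because the entries are generated with polynomially bounded bit length, the positive residual in the NO-case is bounded below by $2^{-\mathrm{poly}(N)}$ and is therefore distinguishable from $0$ within the bit model.

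The heart of the argument is the geometric gadget, and this is where I expect the real difficulty to lie. I would encode each vertex and each edge of $G$ as rows of $X$ in a space whose dimension is linear in $k$, choosing the coordinates so that a collection of rows becomes coplanar in a rank-$r$ subspace precisely when the associated vertices and edges form a clique, and so that the outlier budget $k'$ exactly accounts for the rows that must be discarded in the clique case. A convenient route is to pass through an intermediate subspace-containment or point-configuration problem whose parameter is its dimension and which already carries an $n^{o(d)}$-type ETH lower bound, then realize its instances as \textsc{PCA with Outliers} instances and read the combinatorial solution off the decomposition $X = L + S$. The main obstacle will be controlling the dimension: the reduction must keep $d$ proportional to $k$ while still encoding the $\binom{n}{2}$ adjacency constraints and preserving the sharp zero-versus-positive separation of the optimum. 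Spending more than a constant factor of $k$ dimensions per vertex would weaken the conclusion from $N^{o(d)}$ to something larger, and letting the NO-case residual drop below the bit-precision threshold would break the gap argument; reconciling these two constraints simultaneously is the delicate part of the proof.
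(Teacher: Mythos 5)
First, a point of reference: the paper does not actually prove this proposition --- it is quoted from \cite{refined} as an imported lower bound with no argument supplied, so there is no in-paper proof to compare yours against. The relevant benchmark is the proof in \cite{refined} itself, which, like your plan, proceeds by a parameterized reduction from a clique-type problem arranged so that the constructed \textsc{PCA with Outliers} instance has optimum exactly zero precisely on YES-instances, whence the ``for any $\omega \geq 1$'' clause follows from the gap at zero. Your high-level template --- reduce from $k$-\textsc{Clique} with $d = \Theta(k)$, invoke the Chen--Huang--Kanj--Xia $f(k)\cdot n^{o(k)}$ ETH lower bound, compose running times via $N = \mathrm{poly}(n)$ so that $N^{o(d)} = n^{o(k)}$, and observe that a multiplicative approximation must output $0$ on zero-optimum instances --- is the standard and correct strategy for statements of this shape.

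Nevertheless, as a proof the proposal has a genuine gap, and you name it yourself: the geometric gadget that turns an $n$-vertex graph into a point set in dimension $\Theta(k)$ such that some $n-k'$ rows lie exactly in a rank-$r$ subspace if and only if $G$ contains a $k$-clique is never constructed, and that gadget is the entire mathematical content of the theorem. Everything else you write (the ETH transfer, the gap-at-zero argument, the bit-precision remark) is routine once the gadget exists. Without it you have a proof schema rather than a proof: in particular, you have not shown that the $\binom{n}{2}$ adjacency constraints can be packed into $O(k)$ coordinates while keeping the NO-case residual bounded away from zero --- exactly the tension you flag as ``the delicate part.'' To close the gap you would need either to carry out that construction explicitly, or to exhibit an intermediate problem (for instance, deciding whether $n-k$ of $n$ given points lie in an $r$-dimensional linear subspace) that already carries an $n^{o(d)}$-type ETH lower bound and reduces to \textsc{PCA with Outliers} essentially by inspection.
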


In \cite{fixedpca} new algorithms for the \textsc{PCA with Outliers} problem are proposed. The authors first present a ($1+\epsilon$)-approximation algorithm for the problem, which has a time complexity of $n^{\mathcal{O}(\epsilon^{-2}\cdot r\log r)}$. Then they presented a randomized algorithm for the problem, which outputs the correct answer with probability $1-\delta$ in time $2^{\mathcal{O}(r(\log k+\log \log n)(\log k+\log \log n+\log 1/\delta))}\text{poly}(n,d,1/\delta)$ under the $\alpha$-gap assumption. This assumption generally guarantees that the outliers must have a certain distance to the expected lower-dimensional subspace.


\section{Notation}
For a positive integer $n$, we use $[n]$ to denote the set ${1, 2, \cdots, n}$. The collection of all linear $r$-dimensional subspaces of $\mathbb{R}^d$ is represented as $\mathcal{L}_r(\mathbb{R}^d)$ and the distance of vector $x$ from subspace $V$ denoted by $\mathtt{dist}(x,V)$, is defined as $\inf_{v\in V}||x-v||$. Similarly, the distance between two subspaces $W$ and $W'$ is then $\Vert P_W-P_{W'}\Vert$ where $P_X$ denotes projection onto $X$ and $\Vert\cdot\Vert$ is the operator norm.
Also, the $\Vert \cdot \Vert_1$ denotes the $\ell_1$ norm of a matrix, which is the sum of the absolute values of its entries. Unless otherwise stated, $\mathcal{P}(A)$ denotes the power set of the set $A$.
\\
In this paper, we consider the \textsc{PCA with Outliers} problem, which takes an instance $(X,r,k)$ as input, where $X$ is a subset of $\mathbb{R}^d$ of size $n$, and $r,k\in \mathbb{N}$. The goal is to find a $r$-dimensional subspace $L$ that captures the most important information contained in the data points of $X$ while ignoring up to $k$ outliers. These outliers, which are rows of $X$, will be denoted by a (sparse) matrix $S$ of the same size, where only the rows corresponding to the \textit{inliers} are replaced with zeros. Throughout the paper, we will use matrix form and linear subspace spanned by rows of that matrix, interchangeably. Also, 
for $x\in\mathbb{R}^d$ and $\epsilon>0$, we denote by 
$B_\epsilon^d(x) \;=\;\{\,y\in\mathbb{R}^d:\|y - x\|_2 \le \epsilon\}$
the closed Euclidean ball of radius \(\epsilon\) centered at \(x\) in \(d\)-dimensional space.

\section{Voronoi diagrams} 

Consider a space $X$ (e.g. $\mathbb{R}^d$ or a subset of it), a set of objects $S$ (e.g. a subset of $X$), and a distance function $d: X\times S\to\mathbb{R}$ on this space. A Voronoi diagram partitions $X$ into regions, each corresponding to the set of points closest to an element of $S$. Each element of the set $S$ such as $s_i$ has a corresponding region, called a Voronoi cell, consisting of all the elements of the space $X$ closer to that element $s_i$ than to any other element $s_{j\neq i}$ in the set. Voronoi diagrams have many applications in various fields \cite{aurenhammer1991voronoi,okabe1992spatial}.
An example of a Voronoi diagram with a data set of size 15 is depicted in Figure~\ref{fig:vor}. 

\begin{figure}
    \centering
    \includegraphics[height = 5.5 cm, width = 5.5 cm]{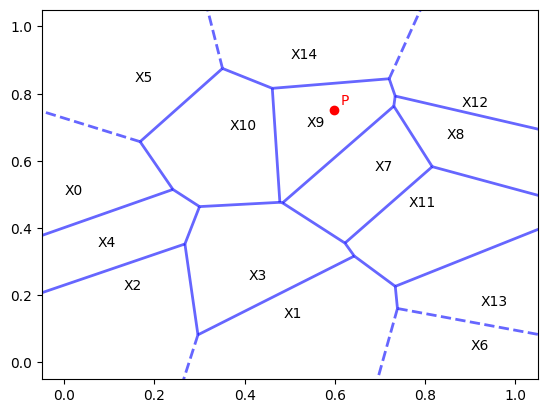}
    \caption{A Voronoi diagram with 15 data points that partition the 2D plane. The query point labeled with $P$ is closer to the point $X_9$ than any other points in the dataset, a property that all points surrounding the region of $X_9$ share as well.}
    \label{fig:vor}
\end{figure}

In the context of the \textsc{PCA with Outliers} problem, we use Voronoi diagrams to assign each data point to a subspace based on its distance to a set of candidate subspaces. This leads to solving the problem more efficiently by reducing the search space and avoiding the need to compute the distance to all possible subspaces.
In the following, we present the formal definition of Voronoi diagrams.

\begin{definition}
\label{VOD}
\textnormal{\textbf{(Voronoi Diagrams)}}
Let $\mathcal{S}\subset \mathbb{R}^d$ be a set of $n$ vectors in a $d$-dimensional space.
\textbf{Voronoi diagram} is defined as the partitioning $V\subset\mathcal{P}(\mathbb{R}^d)$ of $\mathbb{R}^d$ such that for 
a vector $s\in \mathcal{S}$:
$$
\forall x,y\in\mathbb{R}^d \quad x\sim y\iff \arg\min_{s\in S}\|x-s\|=\arg\min_{s\in S}\|y-s\|
$$
where $\mathcal{P}(A)$ is the power set of $A$.
\end{definition}

\begin{definition}
\label{VODOnHyperspaces}
Let $\mathcal{S}$ be a set of $n$ vectors in $\mathbb{R}^d$. We can define the partitioning $V\subset\mathcal{P}(\mathcal{L}_r(\mathbb{R}^d))$ of $\mathcal{L}_r(\mathbb{R}^d)$ such that for a vector $s\in \mathcal{S}$:
$$
\forall U,V\in \mathcal{L}_r(\mathbb{R}^d) \quad U\sim V\iff \arg\min_{s\in \mathcal{S}}\mathtt{dist}(x,U)=\arg\min_{s\in \mathcal{S}}\mathtt{dist}(x,V)
$$
\end{definition}

Although a normal Voronoi cell is defined as the set of elements closest to a single object in $S$, a $\mathtt{k}$-degree Voronoi cell is defined as the set of elements having a particular tuple of $k$ objects in $S$ as their nearest neighbors. They also subdivide space.
\\
Moreover, higher-degree Voronoi diagrams can be generated recursively. To generate the $\mathtt{k}$-degree Voronoi diagram from set $S$, start with the $\mathtt{(k - 1)}$th-degree diagram and replace each cell generated by $T = \{s_1, s_2,\cdots, s_{k-1}\}$ with a Voronoi diagram generated on the set $S \setminus T$.

\begin{definition}
\label{HigherDegreeVOD}
\textnormal{\textbf{(Higher Degree Voronoi Diagrams)}}
Let $\mathcal{S}\subset \mathbb{R}^d$ be a set of $n$ vectors in a $d$-dimensional space. We define the 
\textbf{$k$-degree Voronoi diagram} as a partitioning induced by a function $V:\mathcal{P}^k(\mathcal{S})\to\mathcal{P}(\mathbb{R}^d)$ such that:
$$
V(U)=\bigg\{ \: \:x \:\: \bigg| \:\:\max_{u\in U} \mathtt{dist}(x,u)\leq \min_{t\in \mathcal{S}\setminus U}\Vert x-u\Vert
\:\: \land \:\: \Vert x-u_1\Vert\leq \Vert x-u_2\Vert\leq\cdots\leq \Vert x-u_k\Vert \bigg\}  
$$
where $\mathcal{P}^k(A)$ is the set of all $k$-tuples with elements from $A$ and \(U=(u_1,u_2,\ldots,u_k)\) is an ordered \(k\)-tuple of points from \(\mathcal S\) (i.e.\ \(U\in\mathcal P^k(\mathcal S)\)).
\end{definition}

\begin{definition}
\label{VODOfHyperspacesOnPoints}
Let $\mathcal{S}\subset \mathbb{R}^d$ be a set of $n$ vectors in a $d$-dimensional space. We define the {$k$-degree Voronoi diagram of $r$-dimensional subspaces with respect to vectors $\mathcal{S}$} as a partitioning induced by a function $V:\mathcal{P}^k(\mathcal{S})\to\mathcal{P}(\mathcal{L}(\mathbb{R}^d))$ such that:

\[
\begin{split}
V(U)=\bigg\{ \: \:W \:\: \bigg|
&\:\:\max_{u\in U} \mathtt{dist}(u, W)\leq \min_{t\in \mathcal{S}\setminus U}\mathtt{dist}(t, W)
\\&\:\: \land \:\: \mathtt{dist}(u_1,W)\leq \mathtt{dist}(u_2,W)\leq\cdots\leq \mathtt{dist}(u_k,W) \bigg\}  
\end{split}
\]

where $\mathcal{P}^k(A)$ is the set of all $k$-tuples with elements from $A$ and $\mathtt{dist}(x, y)$ is the Euclidean distance function and $\mathcal{P}(\mathcal{L}(\mathbb{R}^d))$ is the set of all linear subspaces of $ \mathbb{R}^d$.
\end{definition}

As can be seen, in the context of the \textsc{PCA with Outliers} problem, the order of the data is not important relative to the desired subspace. However, our challenge was that we could not directly represent the orderless partitioning of $\mathbb{R}^n$ for subspaces in a suitable arrangement of hyperplanes to apply the proposition~\ref{ArrangementsOfHyperspaces}, and therefore we presented a formulation of the higher degree diagrams on the subspace.

An arrangement of hyperplanes in a $d$-dimensional space $\mathbb{R}^d$ is a subdivision of the space into distinct regions or cells created by a finite set of hyperplanes. Formally, given a set $H$ of $n$ hyperplanes, the arrangement of $H$ divides $\mathbb{R}^d$ into convex polyhedral cells of various dimensions, defined by the intersections and complements of these hyperplanes. Each cell represents a maximal connected subset of $\mathbb{R}^d$ that does not intersect any hyperplane from the set. Arrangements play a crucial role in computational geometry and combinatorics as they provide a framework for analyzing spatial subdivisions. Specifically, arrangements can be used to define Voronoi diagrams. In a Voronoi diagram, the space is divided into regions based on the distance to a specific set of points. Each region, or Voronoi cell, consists of all points closer to one generating point than to any other. By interpreting the bisectors between generating points as hyperplanes, the Voronoi diagram can be viewed as a special case of an arrangement where the hyperplanes are equidistant boundaries between pairs of points. This interpretation allows the powerful combinatorial and geometric tools developed for arrangements to be applied to the study and computation of Voronoi diagrams.

\begin{proposition}
\label{ArrangementsOfHyperspaces}
\textnormal{(\cite{edelsbrunner_constructing_1986}, Theorem 3.3)}
Let $H$ be a set of $n$ hyperplanes in $\mathbb{R}^d$, for $d\geq 2$. Then there is an algorithm that constructs the arrangement of $H$ in $O(n^d)$ time, and this is optimal.
\end{proposition}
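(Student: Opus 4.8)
The plan is to prove the two halves of the statement separately: the $O(n^d)$ upper bound by an incremental construction, and the matching optimality by an output-size argument. The starting point for both is the combinatorial size of the object being built. For $n$ hyperplanes in general position in $\mathbb{R}^d$ the arrangement has $\sum_{j=0}^{d}\binom{n}{j}$ full-dimensional cells, and when one counts faces of every dimension $0,1,\dots,d$ the total number of faces is still $\Theta(n^d)$. Degenerate configurations only decrease these counts, so $O(n^d)$ is an upper bound on the complexity of $\mathcal{A}(H)$ in every case; this same quantity will also drive the running-time analysis and the lower bound.

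For the upper bound I would build the arrangement by inserting the hyperplanes one at a time, maintaining $\mathcal{A}_i = \mathcal{A}(h_1,\dots,h_i)$ as an incidence structure (a generalized doubly connected edge list recording, for each face, its bounding and bounded faces). The key observation is that inserting $h_{i+1}$ affects only the faces of $\mathcal{A}_i$ that $h_{i+1}$ crosses, and that the trace of $h_{i+1}$ on $\mathcal{A}_i$ is precisely the arrangement of the $i$ hyperplanes $\{h_j \cap h_{i+1}\}_{j\le i}$ inside the $(d-1)$-dimensional flat $h_{i+1}\cong\mathbb{R}^{d-1}$. By the size bound applied one dimension down, the number of faces split when $h_{i+1}$ is added is therefore $O(i^{d-1})$.

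The algorithmic heart is to visit exactly those crossed faces without paying for the untouched part of the arrangement. Starting from a single cell known to meet $h_{i+1}$, one traverses the adjacency graph of the cell complex, and at each crossed face splits it into its two halves and re-links the incidence relations with its neighbors in $O(1)$ amortized time. Summing the per-insertion cost yields $\sum_{i=1}^{n} O(i^{d-1}) = O(n^d)$, the claimed bound. Optimality then follows immediately: since the arrangement of $n$ hyperplanes in general position has $\Theta(n^d)$ faces and the algorithm is required to output this full incidence structure, any such algorithm must spend $\Omega(n^d)$ time merely to write down the answer.

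The main obstacle is the bookkeeping in the traversal step, not the two counting estimates, which are routine. One must argue that the set of faces crossed by $h_{i+1}$ is reachable by local adjacency moves from a known seed cell, that splitting a face and repairing its incidences with all neighbors costs only $O(1)$ amortized, and that the traversal never wanders into faces outside the $O(i^{d-1})$-size trace. Making this charging argument precise, and handling degeneracies via a consistent general-position perturbation (e.g.\ symbolic tie-breaking), is where the real effort lies.
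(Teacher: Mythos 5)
The paper does not prove this proposition at all; it imports it verbatim from Edelsbrunner--O'Rourke--Seidel as an external black box, so there is no internal argument to compare yours against. Judged on its own merits, your outline follows the same incremental-construction strategy as the cited source, and the two counting facts you call routine (the $\Theta(n^d)$ total face count and the $\Omega(n^d)$ output-size lower bound) are indeed correct and standard.

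There is, however, a genuine gap at the exact point you wave at with ``$O(1)$ amortized re-linking.'' The quantity you bound by $O(i^{d-1})$ is the number of faces of $\mathcal{A}_i$ that $h_{i+1}$ \emph{crosses}, via the correspondence with the induced arrangement inside $h_{i+1}$. But the cost of inserting $h_{i+1}$ is not proportional to the number of crossed faces: to split a crossed cell $c$ into $c^{+}$ and $c^{-}$ you must walk the boundary of $c$ and redistribute all of its subfaces, which costs time proportional to the total complexity of $c$, not $O(1)$. The per-insertion cost is therefore governed by the total face count of the \emph{zone} of $h_{i+1}$ (all faces incident to cells that $h_{i+1}$ meets), and bounding that zone by $O(i^{d-1})$ is the Zone Theorem --- the actual technical core of the Edelsbrunner--O'Rourke--Seidel result, proved by a separate and nontrivial combinatorial induction (and whose original proof even needed a later correction by Edelsbrunner, Seidel, and Sharir). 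Without it, the naive bound on the work per insertion is $O(i^{d})$, which only yields $O(n^{d+1})$ overall. So your skeleton is the right one, but the step you defer as ``bookkeeping'' is precisely the theorem that makes the bound $O(n^d)$ rather than a routine charging argument.
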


\begin{theorem}
\label{thm:VODalgorithm}
Let $\mathcal{S}\subset \mathbb{R}^d$ be a set of $n$ vectors in a $d$-dimensional space. Construction of the {${k}$-degree Voronoi diagram of hyperspaces with respect to vectors $\mathcal{S}$} as defined in definition \ref{VODOfHyperspacesOnPoints} is computable in time $n^{O(d)}$.
\end{theorem}
\begin{proof}
We are going to formulate the boundary of each cell of ${(n-1)}$-degree diagram with a set of linear equations which is going to result in $n^2$ hyperplanes. using proposition \ref{ArrangementsOfHyperspaces}, we are able to compute the expected diagram in $n^{O(d)}$. ${k}$-degree Voronoi diagram is simply constructible by omitting the last $n-k$ indices of each cell.
\\
For every two elements $s_i$ and $s_j$ from $\mathcal{S}$, Let $P_{i,j}$ be a function on the set of all subspaces of the vector space $\mathcal{L}(\mathbb{R}^d)$ to real numbers $\mathbb{R}$ which measures the difference in the distance of $s_i$ and $s_j$ to a given subspace $W$. So, if the projection operator on $W$ is $P_W$ then we can define this as follows:
$$
P_{i,j}(W)=\|P_W\cdot s_i\|-\|P_W\cdot s_j\|
$$ 
By sorting the values of the $P_{i,j}$'s for a specific $W$, we can determine the corresponding cell for this subspace. To elaborate further, since our focus lies only on the sequence of vector distances from $\mathcal{S}$ to $W$, the signs of the equations $P_{i,j}$ suffice. Consequently, a Voronoi cell can be represented by a sign condition based on these equations. Thus, by the continuity of the equations $P_{i,j}$ within the specified domain, we can deduce that a subspace $W$ lies on the boundary of a Voronoi cell only when there exist $i$ and $j$ such that $P_{i,j}(W)=0$. 
As we have $n^2$ equations with a unique rank $d-1$ solution, we have $n^2$ hyperspaces and their arrangments construct the expected diagram. This results in a $(n^2)^d=n^{O(d)}$ time algorithm.
\end{proof}

Figure \ref{fig:vor_optimal_algorithm}
shows a Voronoi diagram of the furthest subspace of the 2nd-degree for 8 points within a two-dimensional space, projected onto one-dimensional subspaces.

\begin{figure}
    \centering
    \includegraphics[height = 9 cm, width = 9 cm]{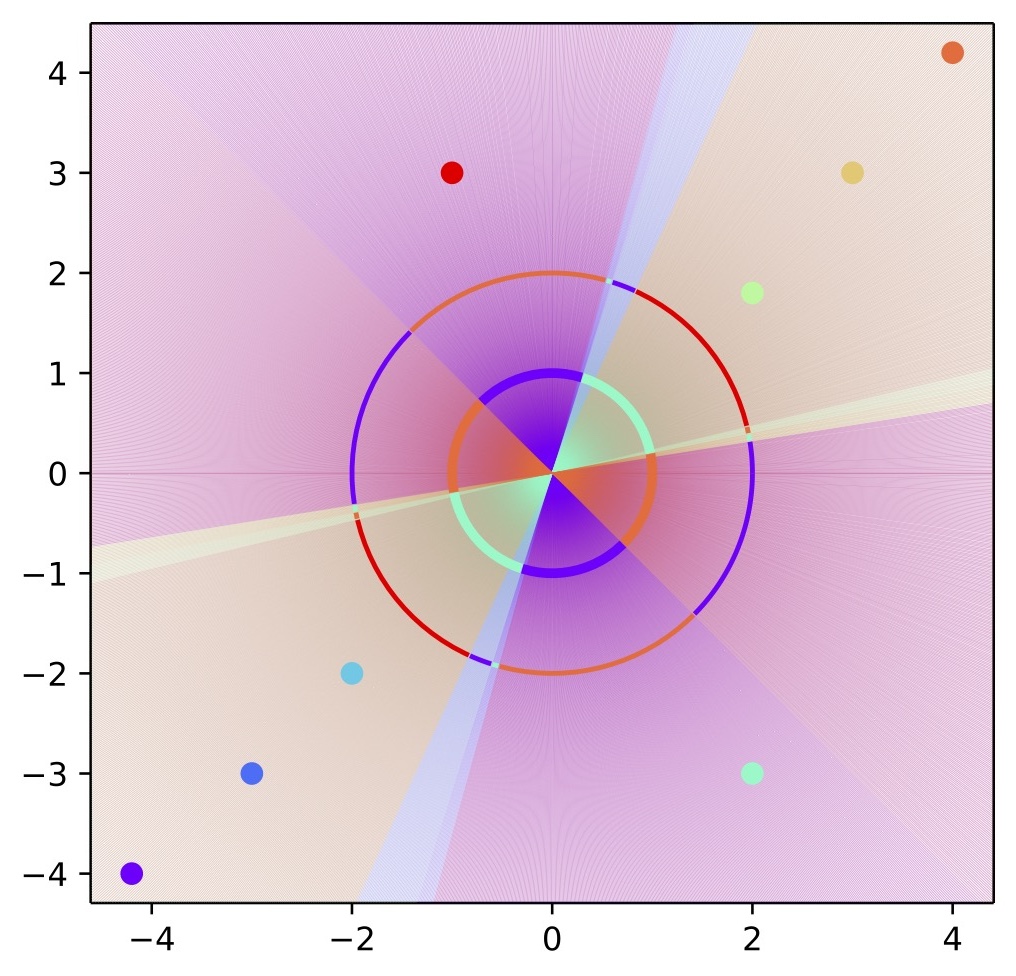}
    \caption{
2nd-degree Voronoi diagram of the furthest subspace for 8 points in $\mathbb{R}^2$ space which can be used to solve PCA with outliers with goal dimension 1 and 2 outliers. The projection of subspaces for each Voronoi cell of the furthest subspace is shown on the unit circle (smaller circle) and the image of subspaces for the Voronoi cell of the second furthest subspace is shown on the circle with radius 2 (larger circle). Space coloring indicates different 2nd-degree cells and subspaces of the same color indicate the same cell. For example, in the above figure, 7 cells are specified, considering that 6 of these eight points belong to exactly one subspace, the number of possible cells is far less than all the possible combinations as outlier data.
}
    \label{fig:vor_optimal_algorithm}
\end{figure}

Also, note that we use the variable $k$ as the number of outliers and the order and degree of Voronoi diagrams interchangeably since the latter is going to denote the number of given outliers.
%
\section{Optimal PCA with Outliers}
In this section, we present our exact algorithm for solving the \textsc{PCA with Outliers} problem in $n^{d+\mathcal{O}(1)}\text{poly}(n,d)$ time. The solution involves obtaining a better subspace embedding for a matrix by removing its outliers. The challenge is that we do not know the actual inlier matrix to compute the scores, as an arbitrary set of rows of the given matrix might be outliers. 
Nevertheless, we can narrow down the available choices such that no possibility for a given combination of outliers, exists.
\\
In earlier work \cite{refined}, \textit{all possible combinations} were generated using methods from real algebraic geometry to model the valid outliers set, and then each cell was tried as a possible candidate. 
The basic idea behind our method is similar to \cite{refined}, but we construct a higher-degree Voronoi diagram out of the given dataset in order to model the outliers set, instead of incorporating algebraic methods.
\begin{theorem}
\label{thm:exactalg}
The \textsc{PCA with Outliers} problem is reducible to solving $n^{d+\mathcal{O}(1)}$ instances of \textsc{PCA}.
\end{theorem}
\begin{proof}
Note that by proposition \ref{lowerbound}, if such an algorithm exists, it should be optimal. In each step, we pick a possible $\mathtt{k}$-tuple of data and label them as outliers, then compute PCA for the rest and measure the relevant error, and finally, depending on the least measured error, we get the output. Nevertheless, the actual problem is the number of \textit{possible $\mathtt{k}$-tuples of outliers}. 
In order to specify each of these possible $\mathtt{k}$-tuples, we need to find their corresponding Voronoi cell. However, there is no need to obtain the exact Voronoi diagram as we only need to know if a set of data can be grouped as outliers, which can be determined by the emptiness of the corresponding cell. 
We choose non-empty cells because when a cell with index tuple $S$ is empty, there is no corresponding subspace with rank $r$ and $S$ as it $\mathtt{(n-k)}$-ordered nearest neighbors to form the required set.
Algorithm~\ref{alg:exactalg} illustrates the above steps.

\begin{algorithm}[h!]
\caption{Computing \textsc{PCA with Outliers} using higher-degree Voronoi Diagram}
\label{alg:exactalg}
\begin{algorithmic}[1]
\Require $X$ ($n$ vectors in $\mathbb{R}^d$), $k$ (\# of outliers), $r$ (Goal dimension) 

\State Compute the $\mathtt{(n-k)}$-degree Voronoi diagram of the rows of $X_{n \times d}$ using theorem~\ref{thm:VODalgorithm}.
\State $\textbf{Cells} \gets$ list of non-empty cells in the computed diagram
\State $\textbf{Index} \gets$ corresponding $k$-tuples of cells in $\textbf{Cells}$
\State $l\leftarrow \infty$ \Comment{keeps the value of the best achieved loss}
\State $S \leftarrow \emptyset$ \Comment{keeps the set of expected outliers}
\For{$\hat{S}\in \textbf{Index}$}
    \If{$\text{loss}(\text{PCA}(X\setminus \hat{S}, r), X\setminus \hat{S})\leq l$}
        \State $l\leftarrow \text{loss}(\text{PCA}(X\setminus \hat{S}, r), X\setminus \hat{S})$
        \State $S\leftarrow \hat{S}$
    \EndIf
\EndFor
\State \Return $\text{PCA}(X\setminus S, r)$
\end{algorithmic}
\end{algorithm}


To show the correctness of our algorithm, we need to show that all possible sets of outliers are explored. Notice that a given subset $S$ of size $k$ is valid as outliers if and only if there is a subspace $L$ of $rank(k)$ with $\mathtt{dist}(x, L)<\mathtt{dist}(s, L)$ for all $x\in X\setminus S$ and $s\in S$. Thus a given subset $S$ of size $k$ is valid as outliers if and only if there is a Voronoi cell corresponding to $X\setminus S$, which proves the correctness of our algorithm.
\\
\end{proof}
Since the number of generated Voronoi cells is bounded as stated in theorem \ref{thm:VODalgorithm}, the total running time of the above algorithm is $n^{d+\mathcal{O}(1)}\text{poly}(n,d)$ which closes the gap in \cite{refined}.


\section{Randomized Algorithm}
In this section, we demonstrate that, rather than explicitly constructing and enumerating every Voronoi cell—a process that can be prohibitively complex—one may instead perform random sampling over candidate subspaces and identify the optimal cell directly.  By leveraging the intrinsic concentration phenomena of high-dimensional geometry, we establish sufficient conditions under which this randomized approach locates the correct Voronoi region with similar time complexity, resolving the need for full diagrammatic enumeration.

We begin by restating the setting of the problem. Let $X\subset \mathbb{R}^d$ be a set of $n$ points contained in the unit ball, and let $S^*\subset X$ be a distinguished subset of $k$ points (The set of optimal outliers). For any vector $v\in\mathbb{R}^d$ and an $r$-dimensional subspace $L\subset \mathbb{R}^d$, define
\[
\mathtt{dist}(v,L) = \|v-P_Lv\|,
\]
where $P_L$ denotes the orthogonal projection onto $L$. Suppose that $L$ minimizes 
\[
\sum_{p\in X\setminus S^*} \mathtt{dist}(p,L).
\]
To tackle this problem efficiently, we assume that the data points in \( X \setminus S^* \) and the outliers in \( S^* \) are well-separated. Consequently, we impose a \emph{separation condition}, formalized in Lemma~\ref{alphagap} to capture the property that outlier points are further
from the best-fit subspace than inlier points. Which asserts the existence of a constant \( \alpha > 0 \) such that for every \( p \in X \setminus S^* \) and \( q \in S^* \), the distance to the subspace \( L \) satisfies the inequality:
\(
\mathtt{dist}(q, L) \ge (1 + \alpha) \cdot \mathtt{dist}(p, L).
\)
This condition ensures that the optimal subspace \( L \) is not significantly influenced by the outliers, making the problem well-posed and enabling us to robustly computes the principal subspace. The separation condition is crucial for both the theoretical guarantees of our algorithm and the subsequent analysis.

\begin{lemma}
\label{alphagap}
\textnormal{\textbf{($\alpha$-Gap Constant)}}
For a set $\text{X} \subset \mathbb{R}^d$, there exists an $\alpha > 0$ such that for an optimal solution of \textsc{PCA with Outliers}, involving a reduced subspace $L$ and a set of outliers $S$, we have for all $s \in S$ and $x \in \text{X} \setminus S$,
$$
\mathtt{dist}(s, L) > (1 + \alpha) \mathtt{dist}(x, L).
$$
\end{lemma}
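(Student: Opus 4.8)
The plan is to produce $\alpha$ explicitly as the smallest relative separation between outliers and inliers realized by the optimal subspace, and then to argue that this quantity is strictly positive. First I would invoke the structure of an optimal solution. For a fixed candidate subspace $L$ of rank $r$, the objective $\sum_{x \in \text{X} \setminus S} \mathtt{dist}(x,L)^2$ is minimized over $S$ with $|S| = k$ exactly by taking $S$ to be the $k$ points of $\text{X}$ that are farthest from $L$, which is the same observation underlying the complexity reduction recalled earlier. Consequently, at the optimal pair $(L,S)$ every outlier is at least as far from $L$ as every inlier, i.e. $\mathtt{dist}(s,L) \ge \mathtt{dist}(x,L)$ for all $s \in S$ and $x \in \text{X} \setminus S$.

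Since $\text{X}$ is finite, I would then set
$$
\rho = \min_{s \in S,\, x \in \text{X} \setminus S} \frac{\mathtt{dist}(s,L)}{\mathtt{dist}(x,L)},
$$
the minimum being taken over the finitely many pairs for which the denominator is positive. Pairs with $\mathtt{dist}(x,L) = 0$ need no attention: whenever some inlier lies off $L$, the farthest points, and hence all outliers, have strictly positive distance, so the required inequality for such a pair is automatic. The inequality of the previous paragraph gives $\rho \ge 1$, and once I show $\rho > 1$ I may take any $\alpha \in (0, \rho - 1)$, for instance $\alpha = (\rho - 1)/2$, so that $\mathtt{dist}(s,L) \ge \rho\,\mathtt{dist}(x,L) > (1+\alpha)\,\mathtt{dist}(x,L)$ for every admissible pair, as required.

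The crux — and the step I expect to be the real obstacle — is the strict inequality $\rho > 1$. Equality $\rho = 1$ means there is a boundary pair, a closest outlier and a farthest inlier, lying at exactly the same distance from $L$. I would first try to remove such ties by exchange: since interchanging two equidistant boundary points between $S$ and $\text{X}\setminus S$ leaves the objective unchanged, one may hope to pick, among the finitely many optimal pairs, one whose closest outlier is strictly farther than its farthest inlier. The real difficulty is that no such choice need exist — symmetric point sets can force \emph{every} optimal solution to contain a boundary tie, making $\rho = 1$ unavoidable. Hence strictness is not a formality but requires a general-position hypothesis on $\text{X}$: generically, and in particular after an arbitrarily small perturbation of the input, no two points are equidistant from the optimal subspace, so all distances $\mathtt{dist}(x,L)$ are distinct and $\rho > 1$. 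This is precisely the regime in which the resulting margin $\alpha$ is later used to guarantee that subspaces sampled close to $L$ recover the same outlier set.
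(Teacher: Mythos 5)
Your proposal is essentially the paper's own argument: the paper also observes that at the optimum every outlier is at least as far from $L$ as every inlier, sets $d_1$ to be the farthest-inlier distance and $d_2$ the nearest-outlier distance, and defines $\alpha = (d_2 - d_1)/d_1$, which is exactly your ratio $\rho - 1$ up to the harmless factor of $1/2$. The obstacle you single out as the crux is real, and the paper does not resolve it either: its proof simply asserts that an outlier ``must have a greater distance to $L$ compared to the inliers'' (strictly), which fails precisely in the tie case you describe, and it also ignores the degenerate case $d_1 = 0$, where its formula for $\alpha$ is undefined. So your more cautious treatment --- noting that strictness requires a general-position hypothesis on $X$, since symmetric inputs can force a boundary tie in every optimal solution --- identifies a genuine gap in the lemma as stated rather than a defect of your own argument; the paper's later use of $\alpha$ in the randomized algorithm implicitly assumes this gap is positive.
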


\begin{proof}
Let $L$ be the optimal subspace for this problem. Since this is the optimal subspace for the inliers, any point classified as an outlier must have a greater distance to $L$ compared to the inliers. Otherwise, if an outlier was closer to $L$ than some inliers, it would be considered part of the main structure captured by $L$ rather than an outlier.
Thus define $d_1$ as the distance of the $(n-k)$th farthest vector to $L$ and $d_2$ as the distance of the $(n-k+1)$ farthest vector to $L$ (i.e., the farthest inlier and the nearest outlier). We can define $\alpha$ as the ratio between these two distances:
$$
\alpha = \frac{d_2 - d_1}{d_1}.
$$
\end{proof}

The \(\alpha\)-gap constant is fundamental in robust Principal Component Analysis (PCA) with outliers, as it guarantees a quantifiable separation between inliers and outliers. This separation is crucial for distinguishing genuine data points from anomalies and thus enhances the robustness of PCA algorithms. In particular, the \(\alpha\)-gap constant generalizes the ideal setting of Robust Subspace Recovery, where inliers lie exactly on a low-dimensional subspace \cite{LermanMaunu2018}, by allowing inliers to reside near the subspace under bounded noise while outliers remain sufficiently distant. Similarly, in probabilistic models such as Robust Probabilistic PCA \cite{Archambeau2006}, inliers drawn from a light-tailed distribution cluster closely around the subspace, whereas outlier distances are amplified in high dimensions.

From a computational perspective, the \(\alpha\)-gap constant facilitates the design of fixed-parameter tractable algorithms for PCA with outliers \cite{fixedpca}. This theoretical soundness translates into practical relevance in applications such as medical imaging and fraud detection, where empirical evidence often suggests that outliers are distinctly separated from inliers (typically with \(\alpha \geq 0.5\)) \cite{Bouwmans2018}.

Also, in high-dimensional spaces, the concentration of measure phenomenon forces inlier distances to the optimal subspace to tightly cluster around a mean value, while outlier distances tend to scale with the ambient dimension. Thus, although the curse of dimensionality is often viewed as a challenge, it in fact reinforces the 
$\alpha$-gap assumption by amplifying the separation between inliers and outliers~\cite{vershynin_high-dimensional_2018, fixedpca}. We are going to leverage this fact to establish lemma \ref{lem:min-dist}.

The following lemma establishes a lower bound on the measure of the set of subspaces \(\hat{L}\) that are close to a fixed subspace \(L\). This measure, quantifies the volume of subspaces \(\hat{L}\) satisfying \(\|P_{\hat{L}} - P_L\| \leq \epsilon\). This result is crucial because it allows us to reason about the probability that a randomly chosen subspace \(\hat{L}\) will lie within a small neighborhood of the optimal subspace \(L\). 

\begin{lemma}\label{lem:volume}
For a fixed subspace $L$ of rank $r$ and $\epsilon>0$ sufficiently small, define
\[
B_\epsilon(L) = \{\text{subspace $\hat{L}$ of rank $r$}: \|P_{\hat{L}}-P_L\|\le \epsilon\}.
\]
Then there exists a constant
\[
c_0(d,r) = J_0\,\frac{\pi^{r(d-r)/2}}{\Gamma\Bigl(\frac{r(d-r)}{2}+1\Bigr)}\, c_2^{-r(d-r)},
\]
depending only on $d$ and $r$, such that
\[
\mu\bigl(B_\epsilon(L)\bigr) \ge c_0(d,r)\,\epsilon^{r(d-r)}.
\]
\end{lemma}

\begin{proof}
A standard local coordinate chart for Grassmanian manifold near $L$ is given by representing any nearby subspace $\hat{L}$ as the graph of a linear map
\(
A : L \to L^\perp,
\)
where $L^\perp$ is the $(d-r)$-dimensional orthogonal complement of $L$. This identifies a neighborhood of $L$ with an open subset of $\mathbb{R}^{N}$, where 
\(
N = r(d-r).
\)
{By perturbation results, the Davis–Kahan “$\sin\Theta$” theorem \cite{DavisKahan1970} and its extension by Stewart–Sun \cite{StewartSun1990}, one obtains that for any linear map $A:L\to L^\perp$ with operator norm $\|A\|$ sufficiently small, there are constants $c_1,c_2>0$, depending only on $d$ and $r$, such that
$$
c_1\,\|A\|\;\le\;\bigl\|P_{\hat L}-P_L\bigr\|\;\le\;c_2\,\|A\|,
$$
}
Thus, $\|P_{\hat{L}}-P_L\|\le \epsilon$ implies $\|A\|\le \epsilon/c_2$. The Euclidean volume in $\mathbb{R}^{N}$ of the ball of radius $\epsilon/c_2$ is
\[
\operatorname{Vol}\Bigl(B_{\epsilon/c_2}(0)\Bigr) = \frac{\pi^{N/2}}{\Gamma\Bigl(\frac{N}{2}+1\Bigr)}\Bigl(\frac{\epsilon}{c_2}\Bigr)^{N}.
\]
{If the Jacobian of the coordinate chart is bounded below by $J_0>0$, then the measure of $B_\epsilon(L)$ satisfies
\[
\mu\bigl(B_\epsilon(L)\bigr) \ge J_0\, \frac{\pi^{r(d-r)/2}}{\Gamma\Bigl(\frac{r(d-r)}{2}+1\Bigr)}\, c_2^{-r(d-r)} \,\epsilon^{r(d-r)}.
\]}
Thus, by setting 
\[
c_0(d,r)=J_0\,\frac{\pi^{r(d-r)/2}}{\Gamma\Bigl(\frac{r(d-r)}{2}+1\Bigr)}\, c_2^{-r(d-r)},
\]
the claim follows. {It remains to show that the Jacobian determinant is bounded below $J_0(d,r)>0$.
At $A=0$, the chart is an isometry up to second‐order terms, so its differential has singular values all equal to 1. By continuity of the singular values on compact neighborhoods, there exists a small radius $\rho>0$ such that for all $|A|\le\rho$, the smallest singular value remains at least, $0<\epsilon_{\rho}\leq 1$.
Hence, by considering the appropriate value $\rho$ such that $\epsilon_\rho=\tfrac12$, on $B_\rho(0)\subset\mathbb{R}^{r(d-r)}$ the Jacobian determinant is $(\tfrac12\bigr)^{r(d-r)}>0$, establishing the desired uniform lower bound.}
\end{proof}

To ensure that such a subspace \(\hat{L}\) preserves the separation between inliers and outliers, we need to understand how perturbations of \(L\) affect the distances of points to the subspace. This is precisely the focus of Lemma~\ref{lem:order}, which bridges the gap between subspace perturbations and the preservation of the inlier-outlier separation. It shows that if \(\hat{L}\) is sufficiently close to \(L\) then the ordering of distances from points to \(\hat{L}\) will preserve the separation between inliers \(X \setminus S^*\) and outliers \(S^*\). This result relies on the \(\alpha\)-gap condition (Lemma~\ref{alphagap}), which ensures that outliers are sufficiently far from \(L\) compared to inliers.

\begin{lemma}\label{lem:order}
Let ${\hat{L}}$ be a subspace of rank $r$ in $\mathbb{R}^d$ such that
\[
\|P_{\hat{L}} - P_L\| \le \epsilon \quad \text{with} \quad \epsilon \le \frac{\alpha}{2} \, \text{dist}_{\min},
\]
where the $\text{dist}_{\min}$ is the minimum distance to the subspace \( L \) over the set of points \( X \setminus S^* \), formally defined as $
\text{dist}_{\min} = \min_{p \in X \setminus S^*} \mathtt{dist}(p, L).
$
Then for every $p\in X\setminus S^*$ and every $q\in S^*$,
\(
\mathtt{dist}(p,{\hat{L}}) < \mathtt{dist}(q,{\hat{L}}).
\)
Meaning that the $(n-k)$-nearest neighbor set of ${\hat{L}}$ is exactly $X\setminus S^*$.
\end{lemma}

\begin{proof}
For any $v\in X$, one has
\[
|\mathtt{dist}(v,{\hat{L}})-\mathtt{dist}(v,L)| \le \|P_{\hat{L}}-P_L\|.
\]
Thus, for $p\in X\setminus S^*$ and $q\in S^*$,
\[
\mathtt{dist}(p,{\hat{L}}) \le \mathtt{dist}(p,L)+\epsilon,\quad \mathtt{dist}(q,X) \ge \mathtt{dist}(q,L)-\epsilon.
\]
By the $\alpha$-gap constant (Lemma \ref{alphagap}),
\[
\mathtt{dist}(q,L)-\mathtt{dist}(p,L) \ge \alpha\, \mathtt{dist}(p,L) \ge \alpha\, \text{dist}_{\min}.
\]
Therefore, if 
\(
2\epsilon \le \alpha\, \text{dist}_{\min},
\)
then
\[
\mathtt{dist}(q,L)-\epsilon \ge \mathtt{dist}(p,L)+\epsilon,
\]
implying $\mathtt{dist}(p,{\hat{L}}) < \mathtt{dist}(q,{\hat{L}})$.
\end{proof}

By combining Lemma~\ref{lem:volume} and Lemma~\ref{lem:order}, we can conclude that with a fine probability, a random subspace \(\hat{L}\) will preserve the inlier-outlier separation, provided \(\epsilon\) is chosen appropriately.

However, Lemma~\ref{lem:order} depends critically on the minimum distance \(\text{dist}_{\min}\) from the inliers \(X \setminus S^*\) to the subspace \(L\). Specifically, the perturbation threshold \(\epsilon = \frac{\alpha}{2} \text{dist}_{\min}\) must be small enough to ensure that the distances of inliers and outliers to \(\hat{L}\) are not perturbed too much. This raises the question: what we expect the value of \(\text{dist}_{\min}\) to be?

The following lemma answers this question by providing a probabilistic lower bound on \(\text{dist}_{\min}\). This result is derived from a geometric argument about the distribution of points in high-dimensional space. The key insight is that, in high dimensions, points are unlikely to cluster too closely around any fixed subspace \(L\), and thus \(\text{dist}_{\min}\) is bounded below and ensures that the perturbation threshold is sufficiently small to apply Lemma~\ref{lem:order}, thereby guaranteeing the preservation of the inlier-outlier separation. In particular, we show that with high probability $\mathrm{dist}_{\min}\;\ge\;c_1\,n^{-1/(d-r)}$.

\begin{lemma}\label{lem:min-dist}
Let $S = \{p_1,\dots,p_n\}\subset\mathbb R^d$ be drawn i.i.d. from the uniform distribution on the unit ball in $\mathbb R^d$.  Let $L\subset\mathbb R^d$ be any fixed $r$-dimensional subspace, and define
$$
\mathrm{dist}_{\min}
=\min_{p\in S}\mathtt{dist}(p,L)
=\min_{p\in S}\bigl\|P_{L^\perp}p\bigr\|.
$$
Then there is a constant $c_1=c_1(d,r)>0$ such that
$$
\Pr\bigl[\mathrm{dist}_{\min}\ge c_1\,n^{-1/(d-r)}\bigr]\;=\;1 - O\bigl(n^{-1}\bigr).
$$
\end{lemma}

\begin{proof}
Since the law of a uniform point $p$ in the $d$-ball is invariant under orthogonal transformations, the marginal distribution of its $L^\perp$-component depends only on the radius.  Concretely, for any $\delta\in[0,1]$,

$$
   \Pr\bigl(\|P_{L^\perp}p\|\le\delta\bigr)
   \;=\;
   \frac{Vol\bigl(B^{d-r}_\delta\bigr)}{Vol\bigl(B^d_1\bigr)}
   \;=\;\delta^{\,d-r},
$$

since the volume of a $k$-dimensional Euclidean ball of radius $\delta$ scales like $\delta^k$ .

Now we can conclude the result using a Tail bound and union bound. Let $\displaystyle X_p=\|P_{L^\perp}p\|$. Then $\Pr\bigl(X_p\le\delta\bigr)=\delta^{\,d-r}$. By a union bound over the $n$ independent samples, $\Pr\bigl(\exists\,p\in S:\,X_p\le\delta\bigr)\;\le\;n\,\delta^{\,d-r}$.

Thus $\Pr\bigl(\mathrm{dist}_{\min}\le\delta\bigr)\le n\,\delta^{\,d-r}$. To make this failure probability $ \le n^{-1}$, choose $\delta = c_1\,n^{-1/(d-r)}$ with $c_1$ a suitable constant, yielding $\Pr(\mathrm{dist}_{\min}\ge\delta)\ge1-O(n^{-1})$. This completes the proof that, under the stated sampling model, the minimum distance of $S$ to any fixed $r$-subspace $L$ exceeds $c_1\,n^{-1/(d-r)}$ with high probability.
\end{proof}

It is important to note that the bound holds provided that the number of points \( n \) is not large enough to form an adversarial distribution. An adversarial distribution would require at least \( n_{\text{max}} = O\left( \delta^{-(d - r)} \right) \) points to densely pack the \( (d - r) \)-dimensional space and break the lower bound on \( \text{dist}_{\min}(S, L) \). If \( n \) is smaller than this threshold, the points are unlikely to form an adversarial configuration, and the bound on \( \text{dist}_{\min}(S, L) \) holds with high probability.

Thus, for \( n \) sufficiently small, the minimum distance between the points in \( S \) and the subspace \( L \) is at least \( c_1 \, n^{-1/(d - r)} \) with high probability.

With the results of Lemmas~\ref{lem:volume},~\ref{lem:order}, and~\ref{lem:min-dist}, in place, we are now ready to derive the probability that a random subspace \(\hat{L}\) correctly identifies the inliers \(X \setminus S^*\) as the \((n-k)\)-nearest neighbors. This result is significant because it provides a quantitative guarantee for the success of the randomized algorithm~\ref{alg:randomized} in identifying the optimal subspace and removing outliers. The constant \(C(d,r,\alpha)\) aggregates the contributions of previously stated lemmas, reflecting the interplay between the geometry of high-dimensional spaces, the separation condition, and the probabilistic nature of the algorithm.

\begin{theorem}\label{thm:knn}
Under assumptions of lemmas ~\ref{lem:volume},~\ref{lem:order}, and~\ref{lem:min-dist}, if ${\hat{L}}$ is chosen uniformly at random from the set of all $r$-dimensional subspace of $\mathbb{R}^d$, then
\[
\Pr\Bigl[X\setminus S^*\text{ is the }(n-k)\text{-NN set of }{\hat{L}}\Bigr] \ge C(d,r,\alpha)\, n^{-r}
\]
where $C$ is a constant depending on $d$, $r$ and $\alpha$. Explicitly, 
\(
C(d,r,\alpha) = c_0(d,r)\,\left(\frac{\alpha\, c_1}{2}\right)^{r(d-r)}.
\)
\end{theorem}

\begin{proof}
By Lemma~\ref{lem:min-dist}, with high probability,
\(
\text{dist}_{\min} \ge c_1\, n^{-1/(d-r)}.
\)
Set
\[
\epsilon =  \frac{\alpha}{2}\, c_1\, n^{-1/(d-r)} \leq \frac{\alpha}{2}\, \text{dist}_{\min}.
\]
Then, by Lemma~\ref{lem:order}, if $\|P_{\hat{L}}-P_L\|\le \epsilon$, the $(n-k)$-nearest neighbor set of ${\hat{L}}$ is exactly $X\setminus S^*$. By Lemma~\ref{lem:volume}, the probability that
\(
\|P_{\hat{L}}-P_L\|\le \epsilon
\)
is at least
\(
c_0(d,r)\,\epsilon^{r(d-r)}.
\)
Substitute the bound for $\epsilon$:
\[
\begin{split}
\Pr\Bigl[X\setminus S^*=(n-k)\text{-NN of }{\hat{L}}\Bigr]
&\ge c_0(d,r) \left(\frac{\alpha}{2}\, c_1\, n^{-1/(d-r)}\right)^{r(d-r)}
\\ &= c_0(d,r)\,\left(\frac{\alpha\, c_1}{2}\right)^{r(d-r)} n^{-r}.
\end{split}
\]
Defining
\[
C(d,r,\alpha) = c_0(d,r)\,\left(\frac{\alpha\, c_1}{2}\right)^{r(d-r)},
\]
the theorem is proved.
\end{proof}

This theorem provides the theoretical foundation for the analysis of our randomized algorithm~\ref{alg:randomized}. To achieve a high-confidence guarantee, the algorithm performs multiple independent trials, each involving the random selection of a subspace \(\hat{L}\) and the computation of distances to this subspace. By repeating this process many times, the algorithm ensures that the probability of at least one successful trial is at least \(1-\delta\).

The time complexity of the algorithm is dominated by the number of trials \(t\) and the cost of each trial, which includes sampling a random subspace, computing distances, and performing PCA. 

\begin{algorithm}[H]
\caption{Randomized PCA with Outliers}
\label{alg:randomized}
\begin{algorithmic}[1]
\Require $X$ ($n$ vectors in $\mathbb{R}^d$), $t$ (\# of trials), $k$ (\# of outliers), $r$ (Goal dimension) 
\State $\tilde{l}\leftarrow \infty$ \Comment{keeps the value of the best achieved loss}
\State $\tilde{S} \leftarrow \emptyset$ \Comment{keeps the set of expected outliers}
\While{$t\geq 0$}
    \State $t\leftarrow t-1$.
    \State $\hat{L}\leftarrow$ uniformly sampled subspace of rank $r$.
    \State $\hat{S}\leftarrow$ the set of $k$ vectors from X with maximum distance to $\hat{L}$.
    \If{$\text{loss}(\text{PCA}(X\setminus \hat{S}, r), X\setminus \hat{S})\leq\tilde{l}$}
        \State $\tilde{l}\leftarrow \text{loss}(\text{PCA}(X\setminus \hat{S}, r), X\setminus \hat{S})$
        \State $\tilde{S}\leftarrow \hat{S}$
    \EndIf
\EndWhile
\State \Return $\text{PCA}(X\setminus \tilde{S}, r)$
\end{algorithmic}
\end{algorithm}

\begin{theorem}
\label{thm:randomized-alg}
Let \( X \subset \mathbb{R}^d \) be a set of \( n \) vectors, $r\leq d$, and $\delta\in (0,1)$ be given parameters. Then, the randomized algorithm~\ref{alg:randomized} returns the optimal solution to the \textsc{PCA with Outliers} problem with probability at least $1-\delta$ in time 
\[
O\Bigl( \frac{n^r \log(1/\delta)}{C(d,r,\alpha)}\text{poly}(d,r) \Bigr)
\]
where $\alpha$ is the same as in Lemma \ref{alphagap} and $C(d,r,\alpha)$
is an explicit constant (depending only on \(d\), \(r\), and \(\alpha\)) as in Theorem ~\ref{thm:knn}.
\end{theorem}

\begin{proof}

{
By Theorem~\ref{thm:knn}, a single random draw of an $r$-dimensional subspace lands in the optimal Voronoi cell with probability $p\geq\frac{C}{n^r}$. We perform $T$ independent trials of the sampling. Let the binary variable $X_i=1$ iff the $i$th trial succeeds.
Then $S = \sum_{i=1}^T X_i$ has $\mathbb{E}[S] = T p$.  We want $\Pr[S \leq 0]\le \delta$. By the Chernoff bound, for any $\epsilon\in(0,1)$,
$$
\Pr\bigl[S \le (1-\epsilon)\mathbb{E}[S]\bigr]
\le
\exp\Bigl(-\tfrac{\epsilon^2}{2}\mathbb{E}[S]\Bigr)\,.
$$
In particular, set $\epsilon=1$.  Then $(1-\epsilon)\,\mathbb{E}[S]=0$, and
$\Pr\bigl[S \le 0\bigr]\le\exp\bigl(-\tfrac12\,T p\bigr).$
Hence, choosing
$$
T \;=\;\Bigl\lceil \frac{2\,n^r\ln(1/\delta)}{C}\Bigr\rceil
\;=\;O\!\bigl(n^r\log(1/\delta)/C\bigr)
$$
guarantees success with probability at least $1-\delta$.

Each trial requires $O\bigl(n\,\mathrm{poly}(d,r)\bigr)$ time, so the total running time is
$$
T \times O\bigl(n\,\mathrm{poly}(d,r)\bigr)
\;=\;
O\!\Bigl(\frac{n^r\log(1/\delta)}{C}\,\mathrm{poly}(d,r)\Bigr),
$$
}
\end{proof}

Our analysis employs a conservative lower bound on the volume of the optimal Voronoi cell, which yields a cautious estimate for the algorithm’s success probability. In practice, the actual probability of sampling the optimal cell and the true success rate substantially exceeds this baseline bound.


\section{Conclusion}
In this paper, we introduced a new algorithm for the problem of PCA in the presence of outliers, leveraging the concept of higher-degree Voronoi diagrams to effectively detect outliers. Our approach builds on previous work by offering an exact solution with a complexity of $n^{d+\mathcal{O}(1)}\text{poly}(n,d)$, which is more efficient than the solutions in \cite{fixedpca} and \cite{refined}. Furthermore, our method provides a more intuitive framework to solve the \textsc{PCA with Outliers} problem by clearly delineating the roles of data partitioning and outlier detection.
We also presented a randomized version of our algorithm, showing that achieving the optimal subspace can be possible through uniform sampling. While this randomized solution is currently less efficient than the one in \cite{fixedpca}, we suggest that adopting Gaussian sampling could enhance its accuracy and efficiency, as our main future study direction. Our paper focused on theoretical aspect of the robust PCA, however we intend to implement our constructive algorithms for use in applications.
Another direction is exploring the dual concept of Voronoi diagrams, specifically Delaunay triangulation, which could provide further insights into the \textsc{PCA with Outliers} problem. The constructive algorithm for Delaunay triangulation presented by~\cite{delaun} offers a promising avenue for investigation.
Finally, we would like to link our work to other variants of PCA with outliers, namely online PCA~\cite{online, NIPS2013_8f121ce0}. In particular, we are interested in Bandit PCA~\cite{pmlr-v99-kotlowski19a}, where there are feedback loops for having better sequential projections, and where data samples are noisy (have outliers).

\bibliographystyle{plainurl}
\bibliography{bib.bib} 




\end{document}